\documentclass[a4paper,10pt]{article}
\usepackage[utf8x]{inputenc}

\usepackage[numbers]{natbib}

\usepackage{Definitions}

\acrodef{VMF}{von Mises Fisher}
\acrodef{KL}{Kullback Leibler}
\acrodef{PDF}{probability density function}
\acrodef{PAC}{probably approximately correct}

\title{A Note on the Kullback-Leibler Divergence for the von Mises-Fisher distribution}
\author{T.R. Diethe\\ \small Department of Electrical and Electronic Engineering\\ \small University of Bristol}

\begin{document}

\maketitle

\begin{abstract}
We present a derivation of the \acf{KL}-Divergence (also known as Relative Entropy) for the \acf{VMF} Distribution in $d-$dimensions.
\end{abstract}

\section{Introduction}
The \acf{VMF} Distribution (also known as the Langevin Distribution \cite{Watamori96}) is a probability distribution on the $(d-1)$-dimensional hypersphere $S^{d-1}$ in $\Reals^d$ \cite{Fisher53}. If $d=2$ the distribution reduces to the von Mises distribution on the circle, and if $d=3$ it reduces to the Fisher distribution on a sphere. It was introduced by \cite{Fisher53} and has been studied extensively by \cite{Mardia14,Mardia75}. The first Bayesian analysis was in \cite{Mardia76} and recently it has been used for clustering on a hypersphere by \cite{Banerjee05}.

\begin{figure}[ht]
  \centering
  \includegraphics[width=0.4\textwidth]{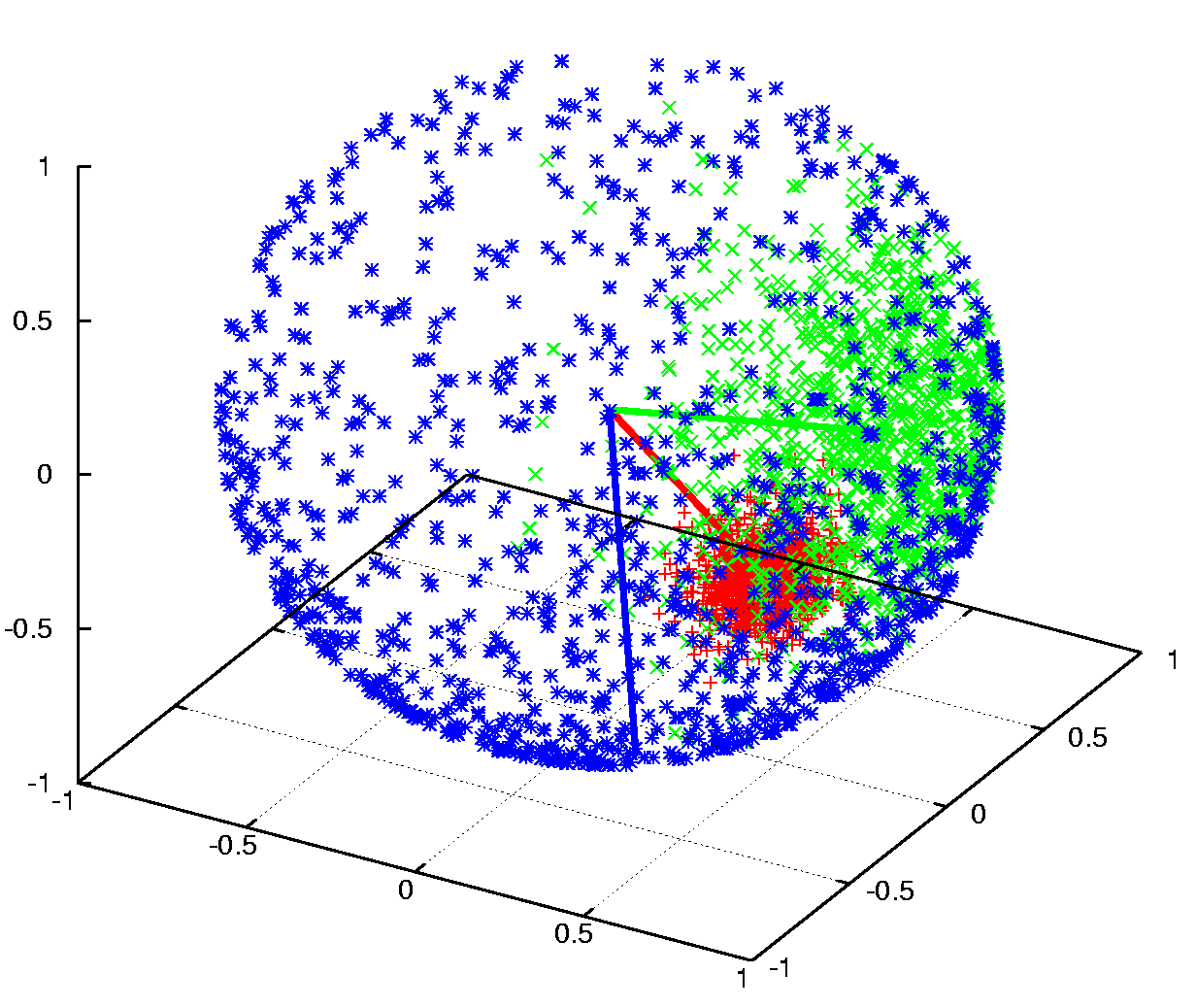}
  \caption{Three sets of 1000 points sampled from three \ac{VMF} distributions on the 3D sphere with $\kappa=1$ (blue), $\kappa=10$ (green) and $\kappa=100$ (red), respectively. The mean directions are indicated with arrows.}
\end{figure}

\section{Preliminaries}
\subsection{Definitions}
We will use $\log(z)$ to denote the natural logarithm of $z$ throughout this article. Before continuing it will be useful to define the Gamma function $\Gamma(z)$,
\begin{align}
    \Gamma(z) &= \int_0^\infty{t^{z-1}e^{-t}dt}, \quad & z\in \Complex, Re(z)>0 \\
    \Gamma(z) &= (z-1)!,                         \quad & z\in \Integers^+
\end{align}
and its relation, the incomplete Gamma function $\Gamma(z,s)$,
\begin{align}\label{eq:incomplete}
    \Gamma(z,s) & = (s-1)! e^{-x} \sum_{m=0}^{s-1}{\frac{z^m}{m!}}, \quad z\in \Integers^+
\end{align}

and the Modified Bessel Function of the First Kind $I_{\alpha}(z)$,
\begin{align}
    I_{\alpha}(z) = \sum_{m=0}^{\infty} \frac{(z/2)^{2m+\alpha}}{m! \Gamma(m+\alpha+1)},
\end{align}
which also has the following integral representations \cite{Abramowitz72},
\begin{align}
    I_{\alpha}(z) &= \frac{(z/2)^\alpha}{\sqrt{\pi}\Gamma(\alpha+1/2)} \int_0^{\pi}{e^{\pm z \cos \theta} \sin^{2d} \theta ~d\theta}, \quad &\LP \alpha \in \Reals\RP \label{eq:int1}\\
                  &= \frac{(z/2)^\alpha}{\sqrt{\pi}\Gamma(\alpha+1/2)} \int_{-1}^{1}{(1-t^2)^{(\alpha-1/2)} e^{\pm zt} ~dt}. \quad & \LP \alpha \in \Reals, \alpha > -0.5\RP \label{eq:int2}
\end{align}
Also of interest is the logarithm of this quantity (using the second integral definition \eqref{eq:int2}),
\begin{align}
 \log\LP I_{\alpha}(z) \RP &= \log \LB \frac{(\frac{z}{2})^\alpha}{\sqrt{\pi}\Gamma(\alpha+1/2)} \int_{-1}^{1}{(1-t^2)^{(\alpha-1/2)} e^{\pm zt} ~dt} \RB \notag \\
                           &= \log \frac{(\frac{z}{2})^\alpha}{\sqrt{\pi}\Gamma(\alpha+1/2)} + \log \LB \int_{-1}^{1}{(1-t^2)^{(\alpha-1/2)} e^{\pm zt} ~dt} \RB \notag \\
                           &= \log \LP\frac{z}{2}\RP^\alpha - \log \sqrt{\pi}\Gamma(\alpha+1/2) + \log \LB \int_{-1}^{1}{(1-t^2)^{(\alpha-1/2)} e^{\pm zt} ~dt} \RB .\label{eq:logMB}
\end{align}
Note that the second term does not depend on $z$.

The Exponential Integral function $E_n(z)$ is given by,
\begin{align}\label{eq:eint}
 E_{\alpha}(z) &= \int_1^{\infty}{\frac{e^{-zt}}{t^\alpha} dt}, \notag \\
        &= z^{\alpha-1}\Gamma(1-\alpha, z).
\end{align}

An identity that will be useful is,
\begin{align}\label{eq:identity1}
 \int_{-1}^1 (1-t)^d e^{t \kappa} = -2^{d-1} E_{-d}(2\kappa) e^{\kappa}. \quad d > 0
\end{align}

\subsection{The \acf{VMF} distribution}
The \ac{PDF} of the \ac{VMF} distribution for a random d-dimensional unit vector $\x (\LN \x \RN_2 = 1)$ is given by:
\begin{align}
    M_d(\mub,\kappa) = c_d(\kappa)e^{\kappa\mub'\x}, \quad \x \in S^{d-1},
\end{align}
where the normalisation constant $c_d(\kappa)$ is given by,
\begin{align}
    c_d(\kappa) = \frac{\kappa^{d/2-1}}{(2\pi)^{d/2}I_{d/2-1}(\kappa)}.
\end{align}
The (non-symmetric) \acf{KL}-Divergence from one probability distributions $q(\x)$ to another probability distribution $p(\x)$ is defined as,
\begin{align}\label{eq:kl}
    \KL(q(\x) || p(\x)) &= \int_{\x} q(\x) \log \frac{q(\x)}{p(\x)} ~d\x, \\
                        &= \EE_x \LB \log \frac{q(\x)}{p(\x)} \RB.
\end{align}
Although this is general to any two distributions, we will assume that $p(\x)$ is the ``prior'' distribution and $q(\x)$ is the ``posterior'' distribution as commonly used in Bayesian analysis.

\section{\ac{KL}-Divergence for the \ac{VMF} Distribution}
\subsection{General Case}
We will assume that we have prior and posterior distributions defined over vectors $\x \in \Reals^d, \LN \x \RN_2 = 1$ as follows,
\begin{align}
    p(\x) \sim M_d(\mub_p, \kappa_p), \notag \\
    q(\x) \sim M_d(\mub_q, \kappa_q).
\end{align}
We will now derive the \ac{KL}-Divergence for two \ac{VMF} distributions. The main problem in doing so will be the the normalisation constants $c_d(\kappa_p)$ and $c_d(\kappa_q)$.
\begin{theorem}
 For prior and posterior distributions as defined above over vectors $\x \in \Reals^d, \LN \x \RN_2 = 1, d < \infty, d$ odd\footnote{For even $d$ we can simply add a ``null'' dimension}, we have
\begin{align}
 \KL(q(\x) || p(\x)) &\leq \kappa_q - \kappa_p \mub_p'\mub_q + \dbullet\log(\kappa_q) + \sum_{m=1}^{\ddiamond}\frac{\kappa_q^m}{m!}  \notag \\
                     & \hspace{1.1cm} -\LP \frac{d^2 - 2d + 1}{4} \RP \log (\kappa_p )+ \ddiamond(\ddiamond + 1)\log \ddiamond - \ddiamond^2 + 1 \notag \\
\end{align}

\end{theorem}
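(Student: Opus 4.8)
The plan is to reduce the divergence to a log-ratio of normalising constants plus a single linear expectation, and then to push the whole $d$-dependence into the Bessel functions $I_{d/2-1}$, which for odd $d$ are of half-integer order and hence elementary. Starting from \eqref{eq:kl} and inserting the two densities $M_d(\mub_q,\kappa_q)$ and $M_d(\mub_p,\kappa_p)$, the exponential factors combine and the divergence becomes
\begin{align}
\KL(q(\x) || p(\x)) = \log\frac{c_d(\kappa_q)}{c_d(\kappa_p)} + \EE_q\LB\LP\kappa_q\mub_q - \kappa_p\mub_p\RP'\x\RB. \notag
\end{align}
The only fact about the posterior needed here is its mean direction; differentiating the normalisation identity $\int_{S^{d-1}} c_d(\kappa)e^{\kappa\mub'\x}\,d\x = 1$ with respect to $\kappa\mub$ gives $\EE_q[\x] = \LP I_{d/2}(\kappa_q)/I_{d/2-1}(\kappa_q)\RP\mub_q$. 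Writing $A \equiv I_{d/2}(\kappa_q)/I_{d/2-1}(\kappa_q)$ and using $\mub_q'\mub_q = 1$, the expectation collapses to $A\LP\kappa_q - \kappa_p\mub_p'\mub_q\RP$.

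Next I would dispose of the linear part. The quantity $A$ is the mean resultant length of the posterior and therefore satisfies $0 \le A < 1$; provided the bracket $\kappa_q - \kappa_p\mub_p'\mub_q$ is non-negative (which holds whenever the posterior is at least as concentrated as the prior, since $\mub_p'\mub_q \le 1$ forces the bracket above $\kappa_q - \kappa_p$), bounding $A \le 1$ produces exactly the two leading terms $\kappa_q - \kappa_p\mub_p'\mub_q$ of the claimed inequality. Everything else must therefore come from the normalisers.

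For the normaliser ratio I would expand $\log c_d(\kappa) = \LP d/2-1\RP\log\kappa - \LP d/2\RP\log(2\pi) - \log I_{d/2-1}(\kappa)$, so that the $(2\pi)^{d/2}$ factors cancel and only the Bessel logarithms and a $\LP d/2-1\RP\log(\kappa_q/\kappa_p)$ term survive. Here I would exploit that $d$ is odd: with $\alpha = d/2-1$ the exponent $\alpha-1/2 = (d-3)/2$ in \eqref{eq:int2} is a non-negative integer, so $(1-t^2)^{(d-3)/2}$ is a genuine polynomial and
\begin{align}
I_{d/2-1}(\kappa) = \frac{(\kappa/2)^{d/2-1}}{\sqrt{\pi}\,\Gamma\LP(d-1)/2\RP}\int_{-1}^{1}(1-t^2)^{(d-3)/2}e^{\kappa t}\,dt. \notag
\end{align}
Expanding the polynomial and applying the identity \eqref{eq:identity1} together with the exponential-integral and incomplete-Gamma relations \eqref{eq:eint}, \eqref{eq:incomplete} evaluates this integral as $e^{\kappa}$ times a finite sum of terms $\kappa^m/m!$. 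Feeding this through \eqref{eq:logMB} splits $\log I_{d/2-1}(\kappa)$ into a term linear in $\kappa$, a term proportional to $\log\kappa$, the finite sum $\sum_m \kappa^m/m!$ (after using $\log(1+\sum)\le\sum$), and a $\kappa$-independent remainder built from $\Gamma\LP(d-1)/2\RP$. Applying this to $\kappa_q$ and $\kappa_p$ separately, and bounding the residual factorial constant by a Stirling estimate, is what manufactures the $\dbullet\log(\kappa_q)$, the $\sum_{m=1}^{\ddiamond}\kappa_q^m/m!$, the $-\LP(d-1)^2/4\RP\log(\kappa_p)$ and the $\ddiamond(\ddiamond+1)\log\ddiamond - \ddiamond^2 + 1$ pieces.

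The main obstacle is this last Bessel bookkeeping rather than the reduction itself, and two things need care. First, the inequality directions must be kept consistent: the term $-\log I_{d/2-1}(\kappa_q)$ calls for a lower bound on $I_{d/2-1}(\kappa_q)$ while $+\log I_{d/2-1}(\kappa_p)$ calls for an upper bound, and these must be chosen so that the linear-in-$\kappa_q$ contribution does not cancel the $\kappa_q$ already obtained from $A \le 1$, and so that the coefficient of $\log\kappa_p$ sharpens to $(d-1)^2/4$. Second, controlling $\log$ of the finite exponential-type sum and collapsing the $\Gamma\LP(d-1)/2\RP$ constant into the closed form $\ddiamond(\ddiamond + 1)\log\ddiamond - \ddiamond^2 + 1$ requires the Stirling-type estimate, which is precisely where the slack turning the equality into an inequality is introduced. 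Once these bounds are fixed, collecting the $\kappa_q$-, $\kappa_p$- and constant-terms yields the stated upper bound.
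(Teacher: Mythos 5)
Your route is essentially the paper's: the same decomposition of $\KL(q(\x)||p(\x))$ into the log-ratio of normalisers plus a linear expectation, the same exploitation of odd $d$ to make $(1-t^2)^{\ddiamond}$ a polynomial in the integral representation \eqref{eq:int2}, and the same chain \eqref{eq:identity1} $\to$ \eqref{eq:eint} $\to$ \eqref{eq:incomplete} followed by the Jensen step, the Stirling bounds on $\log m!$, and $\log(1+n)\le n$ to manufacture the $\sum_{m=1}^{\ddiamond}\kappa_q^m/m!$, the $-\LP (d-1)^2/4 \RP\log\kappa_p$ and the $\ddiamond(\ddiamond+1)\log\ddiamond-\ddiamond^2+1$ pieces. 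The one place you genuinely diverge is the expectation of the linear term. The paper simply asserts $\EE[\x]=\mub_q$ and obtains $\kappa_q-\kappa_p\mub_p'\mub_q$ as an equality; that assertion is false for finite $\kappa_q$ (the mean is $A\,\mub_q$ with $A=I_{d/2}(\kappa_q)/I_{d/2-1}(\kappa_q)<1$, exactly as you derive), so your version is the more honest one. What it buys is a correct expectation; what it costs is that the step $A\LP\kappa_q-\kappa_p\mub_p'\mub_q\RP\le\kappa_q-\kappa_p\mub_p'\mub_q$ requires the bracket to be non-negative, a hypothesis (for which $\kappa_q\ge\kappa_p$ suffices, as you note) that the theorem statement does not impose; when $\kappa_p\mub_p'\mub_q>\kappa_q$ the inequality reverses and this step fails, so you would need either to add that hypothesis or to handle that regime separately. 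Beyond this, your write-up leaves the Bessel bookkeeping as a plan rather than a computation, but every identity you name is the one the paper actually invokes, in the order the paper invokes it.
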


\begin{proof}
From \eqref{eq:kl}, letting $\dstar = \frac{d}{2}-1$, $\ddiamond = \frac{d-3}{2}$, and $\dbullet = \frac{d-1}{2}$, we have,
{\allowdisplaybreaks 
\begin{align}
    &\KL(q(\x) || p(\x)) = \int_{\x} q(\x) \log \frac{q(\x)}{p(\x)} d\x, \notag \\
                        &= \int_{\x} q(\x) \LB \log c_d(\kappa_q)e^{\kappa_q\mub_q'\x} - \log c_d(\kappa_p)e^{\kappa_p\mub_p'\x} \RB d\x, \notag \\
                        &= \int_{\x} q(\x) \LB \log c_d(\kappa_q) - \log c_d(\kappa_p) + \kappa_q\mub_q'\x - \kappa_p\mub_p'\x \RB d\x, \notag \\
                        &= \int_{\x} q(\x) \LB \dstar\log(\kappa_q) - (d/2)\log(2\pi) - \log I_{\dstar}(\kappa_q) \right. \notag \\ 
                        & \hspace{1.3cm}\left. - \dstar\log(\kappa_p) + (d/2)\log(2\pi) + \log I_{\dstar}(\kappa_p) + \kappa_q\mub_q'\x - \kappa_p\mub_p'\x \RB d\x, \notag \\
                        &= \int_{\x} q(\x) \LB \dstar\log\LP\frac{\kappa_q}{\kappa_p}\RP - \log I_{\dstar}(\kappa_q) + \log I_{\dstar}(\kappa_p) + \kappa_q\mub_q'\x - \kappa_p\mub_p'\x \RB d\x \notag \\
                        &= \int_{\x} q(\x) \LB \dstar\log\LP\frac{\kappa_q}{\kappa_p}\RP + \kappa_q\mub_q'\x - \kappa_p\mub_p'\x \right. \notag \\
                        & \hspace{1.1cm} - \log \LP\frac{\kappa_q}{2}\RP^\dstar + \log \sqrt{\pi}\Gamma\LP\dstar+\frac{1}{2}\RP - \log \int_{-1}^{1}{(1-t^2)^{(\dstar-1/2)} e^{\pm \kappa_q t} ~dt} \notag \\
                        & \hspace{1.1cm}\left. + \log \LP\frac{\kappa_p}{2}\RP^\dstar - \log \sqrt{\pi}\Gamma\LP\dstar+\frac{1}{2}\RP + \log \int_{-1}^{1}{(1-t^2)^{(\dstar-1/2)} e^{\pm \kappa_p t} ~dt} \RB d\x \notag \\
&\mbox{(Using \eqref{eq:logMB})} \\
                        &= \int_{\x} q(\x) \LB \dstar\log\LP\frac{\kappa_q}{\kappa_p}\RP + \kappa_q\mub_q'\x - \kappa_p\mub_p'\x - \dstar \log \LP\frac{\kappa_q}{2}\RP + \dstar \log \LP\frac{\kappa_p}{2}\RP \right.\notag \\
                        & \hspace{1.1cm} \left. - \log \int_{-1}^{1}{(1-t^2)^{\ddiamond} e^{\pm \kappa_q t} ~dt} + \log \int_{-1}^{1}{(1-t^2)^{\ddiamond} e^{\pm \kappa_p t} ~dt} \RB d\x \notag \\
                        &= \int_{\x} q(\x) \LB \kappa_q\mub_q'\x - \kappa_p\mub_p'\x \right. \notag \\
                        & \hspace{1.1cm} \left. - \log \int_{-1}^{1}{(1-t^2)^{\ddiamond} e^{\pm \kappa_q t} ~dt} + \log \int_{-1}^{1}{(1-t^2)^{\ddiamond} e^{\pm \kappa_p t} ~dt} \RB d\x \notag \\
                        &= \int_{\x} q(\x) \LB \kappa_q\mub_q'\x - \kappa_p\mub_p'\x \right. \notag \\
                        & \hspace{1.1cm}\left. -\log \LB -2^{\frac{d-3}{2}} E_{-\ddiamond}(2\kappa_q) e^{\kappa_q} \RB +\log \LB -2^{\frac{d-3}{2}} E_{-\ddiamond}(2\kappa_p) e^{\kappa_p} \RB \RB d\x \notag \\
&\mbox{(Using \eqref{eq:identity1})} \\
                        &= \int_{\x} q(\x) \LB \kappa_q\mub_q'\x - \kappa_p\mub_p'\x -\kappa_q + \kappa_p - \log \LB E_{-\ddiamond}(2\kappa_q) \RB + \log \LB E_{-\ddiamond}(2\kappa_p) \RB \RB d\x \notag \\
                        &= \int_{\x} q(\x) \LB \kappa_q(\mub_q'\x - 1) - \kappa_p(\mub_p'\x - 1) \right. \notag \\
                        & \hspace{1.1cm}\left. - \log \LP 2\kappa_q^{-\dbullet} \Gamma\LP \dbullet, 2\kappa_q\RP \RP + \log \LP 2\kappa_p^{-\dbullet} \Gamma\LP \dbullet, 2\kappa_p\RP \RP \RB d\x \notag \\
&\mbox{(Using the definition of the Exponential Integral function \eqref{eq:eint})} \\
                        &= \int_{\x} q(\x) \LB \kappa_q(\mub_q'\x - 1) - \kappa_p(\mub_p'\x - 1) + \dbullet\log(2\kappa_q) - \dbullet\log(2\kappa_q)\right. \notag \\
                        & \hspace{1.1cm}\left. - \log \LP \Gamma\LP\dbullet, 2\kappa_q\RP \RP + \log \LP \Gamma\LP\dbullet, 2\kappa_p\RP \RP \RB d\x \notag \\
                        &= \int_{\x} q(\x) \LB \kappa_q(\mub_q'\x - 1) - \kappa_p(\mub_p'\x - 1) + \dbullet\log(2\kappa_q) - \dbullet\log(2\kappa_q)\right. \notag \\
                        & \hspace{1.1cm}\left. - \log \LP \ddiamond ! e^{-\kappa_q} \sum_{m=0}^{\ddiamond}\frac{\kappa_q^m}{m!} \RP
                                               + \log \LP \ddiamond ! e^{-\kappa_p} \sum_{m=0}^{\ddiamond}\frac{\kappa_p^m}{m!} \RP \RB d\x \notag \\
&\mbox{(Using \eqref{eq:incomplete} and that $\dbullet - 1 = \ddiamond$)} \\
                        &= \int_{\x} q(\x) \LB \kappa_q(\mub_q'\x - 1) - \kappa_p(\mub_p'\x - 1) + \dbullet\log(\kappa_q) - \dbullet\log(\kappa_p) + \kappa_q - \kappa_p \right. \notag \\
                        & \hspace{1.1cm}\left. - \log \LP \sum_{m=0}^{\ddiamond}\frac{\kappa_q^m}{m!} \RP
                                               + \log \LP \sum_{m=0}^{\ddiamond}\frac{\kappa_p^m}{m!} \RP \RB d\x \notag \\
                        &= \int_{\x} q(\x) \LB \kappa_q \mub_q'\x - \kappa_p \mub_p'\x + \dbullet\log(\kappa_q) - \dbullet\log(\kappa_p) \right. \notag \\
                        & \hspace{1.1cm}\left. - \log \LP \sum_{m=0}^{\ddiamond}\frac{\kappa_q^m}{m!} \RP
                                               + \log \LP \sum_{m=0}^{\ddiamond}\frac{\kappa_p^m}{m!} \RP \RB d\x \notag \\
&\mbox{Further simplifications:}\\
                        &\leq \int_{\x} q(\x) \LB \kappa_q \mub_q'\x - \kappa_p \mub_p'\x + \dbullet\log(\kappa_q) - \dbullet\log(\kappa_p) \right. \notag \\
                        & \hspace{1.1cm}\left. - \log \LP \sum_{m=0}^{\ddiamond}\frac{\kappa_q^m}{m!} \RP
                                               + \LP \sum_{m=0}^{\ddiamond} \log \frac{\kappa_p^m}{m!} \RP \RB d\x \notag \\
&\mbox{(by Jensen's inequality)}\\
                        &= \int_{\x} q(\x) \LB \kappa_q \mub_q'\x - \kappa_p \mub_p'\x + \dbullet\log(\kappa_q) - \dbullet\log(\kappa_p) \right. \notag \\
                        & \hspace{1.1cm}\left. + \log \LP \sum_{m=0}^{\ddiamond}\frac{\kappa_q^m}{m!} \RP 
                                               - \sum_{m=0}^{\ddiamond} \LP m \log (\kappa_p) - \log m! \RP \RB d\x \notag \\
                        &\leq \int_{\x} q(\x) \LB \kappa_q \mub_q'\x - \kappa_p \mub_p'\x + \dbullet\log(\kappa_q) - \dbullet\log(\kappa_p) \right. \notag \\
                        & \hspace{1.1cm}\left. + \log \LP \sum_{m=0}^{\ddiamond}\frac{\kappa_q^m}{m!} \RP 
                                               - \sum_{m=1}^{\ddiamond} \LP m \log (\kappa_p) - m \log m + m - 1 \RP \RB d\x \notag \\
&\mbox{(using $n \log \frac{n}{e} + 1 \leq \log n! \leq (n+1)\log\frac{n+1}{e} + 1$)}\\
                        &= \int_{\x} q(\x) \LB \kappa_q \mub_q'\x - \kappa_p \mub_p'\x + \dbullet\log(\kappa_q) - \dbullet\log(\kappa_p) + \log \LP \sum_{m=0}^{\ddiamond}\frac{\kappa_q^m}{m!} \RP 
                                               \right. \notag \\
                        & \hspace{1.1cm}\left. - \sum_{m=1}^{\ddiamond} \LP m \log (\kappa_p) - m \log m \RP - \ddiamond(\ddiamond + 1) + (\ddiamond + 1) \RB d\x \notag \\
                        &= \int_{\x} q(\x) \LB \kappa_q \mub_q'\x - \kappa_p \mub_p'\x + \dbullet\log(\kappa_q) - \dbullet\log(\kappa_p) + \log \LP \sum_{m=0}^{\ddiamond}\frac{\kappa_q^m}{m!} \RP 
                                               \right. \notag \\
                        & \hspace{1.1cm}\left. - \sum_{m=1}^{\ddiamond} \LP m \log (\kappa_p) - m \log m \RP - \ddiamond^2 + 1 \RB d\x \notag \\
                        &= \int_{\x} q(\x) \LB \kappa_q \mub_q'\x - \kappa_p \mub_p'\x + \dbullet\log(\kappa_q) - \dbullet\log(\kappa_p) + \log \LP \sum_{m=0}^{\ddiamond}\frac{\kappa_q^m}{m!} \RP 
                                               \right. \notag \\
                        & \hspace{1.1cm}\left. - \sum_{m=1}^{\ddiamond} \LP m \log (\kappa_p) \RP + \ddiamond(\ddiamond + 1)\log \ddiamond - \ddiamond^2 + 1 \RB d\x \notag \\
                        &= \int_{\x} q(\x) \LB \kappa_q \mub_q'\x - \kappa_p \mub_p'\x + \dbullet\log(\kappa_q) - \dbullet\log(\kappa_p) + \log \LP \sum_{m=0}^{\ddiamond}\frac{\kappa_q^m}{m!} \RP 
                                               \right. \notag \\
                        & \hspace{1.1cm}\left. - \ddiamond(\ddiamond + 1)\log (\kappa_p) + \ddiamond(\ddiamond + 1)\log \ddiamond - \ddiamond^2 + 1 \RB d\x \notag \\
                        &= \int_{\x} q(\x) \LB \kappa_q \mub_q'\x - \kappa_p \mub_p'\x + \dbullet\log(\kappa_q) - \dbullet\log(\kappa_p) + \log \LP \sum_{m=0}^{\ddiamond}\frac{\kappa_q^m}{m!} \RP 
                                               \right. \notag \\
                        & \hspace{1.1cm}\left. -\LP \frac{{\left( d-3\right) }^{2}}{4} + \frac{d-3}{2}\RP \log (\kappa_p) + \ddiamond(\ddiamond + 1)\log \ddiamond - \ddiamond^2 + 1 \RB d\x \notag \\
                        &= \int_{\x} q(\x) \LB \kappa_q \mub_q'\x - \kappa_p \mub_p'\x + \dbullet\log(\kappa_q) + \log \LP \sum_{m=0}^{\ddiamond}\frac{\kappa_q^m}{m!} \RP 
                                               \right. \notag \\
                        & \hspace{1.1cm}\left. -\LP \frac{d^2 - 2d + 1}{4} \RP \log (\kappa_p )+ \ddiamond(\ddiamond + 1)\log \ddiamond - \ddiamond^2 + 1 \RB d\x \notag \\
                        &= \int_{\x} q(\x) \LB \kappa_q \mub_q'\x - \kappa_p \mub_p'\x + \dbullet\log(\kappa_q) + \log \LP 1 + \sum_{m=1}^{\ddiamond}\frac{\kappa_q^m}{m!} \RP 
                                               \right. \notag \\
                        & \hspace{1.1cm}\left. -\LP \frac{d^2 - 2d + 1}{4} \RP \log (\kappa_p )+ \ddiamond(\ddiamond + 1)\log \ddiamond - \ddiamond^2 + 1 \RB d\x \notag \\
                        &\leq \int_{\x} q(\x) \LB \kappa_q \mub_q'\x - \kappa_p \mub_p'\x + \dbullet\log(\kappa_q) + \sum_{m=1}^{\ddiamond}\frac{\kappa_q^m}{m!} 
                                               \right. \notag \\
                        & \hspace{1.1cm}\left. -\LP \frac{d^2 - 2d + 1}{4} \RP \log (\kappa_p )+ \ddiamond(\ddiamond + 1)\log \ddiamond - \ddiamond^2 + 1 \RB d\x \notag \\
&\mbox{(using $n \geq \log (1 + n) \geq \frac{n}{1+n}, ~(n > -1)$)}\\
                        &= \int_{\x} q(\x) \LB \kappa_q \mub_q'\x - \kappa_p \mub_p'\x + \dbullet\log(\kappa_q) + \sum_{m=1}^{\ddiamond}\frac{\kappa_q^m}{m!} 
                                               \right. \notag \\
                        & \hspace{1.1cm}\left. -\LP \frac{d^2 - 2d + 1}{4} \RP \log (\kappa_p )+ \ddiamond(\ddiamond + 1)\log \ddiamond - \ddiamond^2 + 1 \RB d\x \notag \\
                        &= \kappa_q - \kappa_p \mub_p'\mub_q + \dbullet\log(\kappa_q) + \sum_{m=1}^{\ddiamond}\frac{\kappa_q^m}{m!}  \notag \\
                        & \hspace{1.1cm} -\LP \frac{d^2 - 2d + 1}{4} \RP \log (\kappa_p )+ \ddiamond(\ddiamond + 1)\log \ddiamond - \ddiamond^2 + 1 \notag \\
&\mbox{(as $\int_{\x} q(\x) = 1$, and $\EE[\x] = \mub_q$, and $\mub_q'\mub_q = 1$)}
\end{align}
}
\end{proof}

The term $\mub_q'\mub_p$ can be seen as the cosine distance between the prior and postieror mean vectors. 
For $0 < \kappa_q < 1$, the term $\sum_{m=1}^{\ddiamond}\frac{\kappa_q^m}{m!} \geq \kappa_q$. However for large $\kappa_q$ and large $d$ this term can grow very large.

\subsection*{Special case: uniform prior}
Since the \ac{VMF} distribution is defined on the $S^{d-1}$, hypersphere, which is actually a specific case of a Stiefel manifold where $r=1$ is the radius. The Stiefel manifold has finite area,
\begin{align}
 \tau(d,r) = \frac{2^r\pi^{\frac{pr}{2}}}{\pi^{\frac{r(r-1)}{4}}\prod_{j=1}^{r}{\Gamma\LP \frac{p-j+1}{2} \RP}},
\end{align}
and so,
\begin{align}
 \tau(d,1) = \frac{2\pi^{\frac{p}{2}}}{\Gamma\LP \frac{p}{2} \RP},
\end{align}

For the special case of the uniform prior (more precisely $\lim_{\kappa_p \to 0}$), the prior \ac{PDF} reduces to,
\begin{align}
    M_d(\mub,\kappa) &= c_d(0)e^{0} \notag\\
                     &= \frac{\Gamma \LP \frac{d}{2} \RP}{2\pi^{\frac{d}{2}}},
\end{align}
which is simply one over the area on the manifold. This leads to a simpler form for the \ac{KL}-divergence.
\begin{corollary}
 For prior and posterior distributions as defined above over vectors $\x \in \Reals^d, \LN \x \RN_2 = 1, d < \infty$, we have
\begin{align}
 \KL(q(\x) || p(\x)) &= \kappa_q - \dstar \log 2 \notag \\
\end{align}
\end{corollary}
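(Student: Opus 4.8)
The plan is to exploit the defining feature of a uniform prior: when $\kappa_p \to 0$ the prior density $p(\x) = c_d(0)$ is \emph{constant} on $S^{d-1}$, so it carries no dependence on $\x$. Consequently the divergence \eqref{eq:kl} splits into a cross-entropy term and the (constant) prior log-normaliser,
\[
\KL(q(\x)\,||\,p(\x)) = \EE_q\LB \log q(\x) \RB - \log c_d(0),
\]
and all of the Bessel-function / incomplete-Gamma machinery that the general theorem needed for the \emph{prior} simply disappears --- only the posterior normaliser survives. This is also why I would compute the corollary directly rather than by letting $\kappa_p\to 0$ in the theorem's bound: several of those terms (for instance $-\LP\frac{d^2-2d+1}{4}\RP\log\kappa_p$) diverge termwise as $\kappa_p\to0$, even though the underlying $\log c_d(\kappa_p)$ stays finite, so the limit is not available cleanly.

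First I would evaluate the cross-entropy. Writing $\log q(\x) = \log c_d(\kappa_q) + \kappa_q\mub_q'\x$ and taking the expectation under $q$ gives $\EE_q\LB\log q(\x)\RB = \log c_d(\kappa_q) + \kappa_q\mub_q'\EE_q[\x]$; using the same conventions as in the theorem ($\EE_q[\x] = \mub_q$ and $\mub_q'\mub_q = 1$) this collapses to $\log c_d(\kappa_q) + \kappa_q$. Substituting the stated value $\log c_d(0) = \log\frac{\Gamma(d/2)}{2\pi^{d/2}}$ then reduces the whole problem to simplifying a single ratio of normalising constants:
\[
\KL(q(\x)\,||\,p(\x)) = \kappa_q + \log\frac{c_d(\kappa_q)}{c_d(0)}.
\]

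The remaining task is therefore to show $\log\bigl(c_d(\kappa_q)/c_d(0)\bigr) = -\dstar\log 2$. Combining the $(2\pi)^{d/2}$, the $\pi^{d/2}$, and the factor $2$ in the denominator of $c_d(0)$ disposes of the $\pi$-dependence and isolates $\dstar\log\kappa_q - \log I_{\dstar}(\kappa_q) - \log\Gamma(d/2)$ together with an explicit power of two. To handle $\log I_{\dstar}(\kappa_q)$ I would feed it through exactly the chain the theorem uses for the posterior --- the integral representation \eqref{eq:int2}, the identity \eqref{eq:identity1}, the exponential-integral relation \eqref{eq:eint}, and the finite expansion \eqref{eq:incomplete} valid for odd $d$ --- so that the $\kappa_q$-dependence and the log-sum term cancel against $\dstar\log\kappa_q$, while the half-integer Gamma $\Gamma(d/2)$ is absorbed by the $\sqrt{\pi}\,\Gamma(\dstar+\frac{1}{2})$ constant carried by \eqref{eq:int2}, leaving a pure constant.

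The main obstacle I anticipate is precisely the bookkeeping of the powers of two. In the general theorem the constants $(d/2)\log(2\pi)$ and $\log\sqrt{\pi}\,\Gamma(\dstar+\frac{1}{2})$ cancelled identically between prior and posterior and so never had to be tracked; here that cancellation no longer occurs, so the factor $2^{\ddiamond}$ produced by \eqref{eq:identity1} and the factor $2^{-\dbullet}$ produced by \eqref{eq:eint} must be combined carefully (note $\ddiamond - \dbullet = -1$) with the Bessel prefactor $\bigl(\kappa_q/2\bigr)^{\dstar}$. It is the net power of two surviving this combination, set against the single $2$ in $c_d(0)$, that should yield exactly the constant $-\dstar\log 2$; I would verify this step last, since it is where an off-by-a-factor-of-two slip is most likely to hide.
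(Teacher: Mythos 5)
Your setup coincides with the paper's: both reduce the problem to $\KL(q\,||\,p) = \kappa_q + \log\bigl(c_d(\kappa_q)/c_d(0)\bigr)$ using the same conventions ($\EE_q[\x]=\mub_q$, $\mub_q'\mub_q=1$), and your observation that one should not take $\kappa_p\to 0$ termwise in the Theorem's bound is sound. The gap is in your final step. You claim that pushing $\log I_{\dstar}(\kappa_q)$ through the Theorem's chain --- \eqref{eq:int2}, \eqref{eq:identity1}, \eqref{eq:eint}, \eqref{eq:incomplete} --- makes the $\kappa_q$-dependence cancel against $\dstar\log\kappa_q$ and leaves a pure constant. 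It does not. Running that chain on the posterior normaliser yields exactly the residual terms that survive into the Theorem's bound, namely $\dbullet\log\kappa_q - \log\LP\sum_{m=0}^{\ddiamond}\kappa_q^m/m!\RP$ (note $\dbullet-\dstar=\tfrac12$, so the powers of $\kappa_q$ do not annihilate), plus the constants you are tracking. So the quantity $\log\bigl(c_d(\kappa_q)/c_d(0)\bigr)$ does not collapse to $-\dstar\log 2$ by this route, and the bookkeeping of powers of two that you flag as the main risk is not actually the step that fails.

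The paper's own proof gets to the stated constant by a different move: it replaces $-\log I_{\dstar}(\kappa_q)$ directly by $-\log\LP\kappa_q/2\RP^{\dstar} + \log\Gamma\LP\tfrac{d}{2}\RP$, i.e.\ it keeps only the Bessel prefactor from \eqref{eq:logMB} and absorbs the integral term at its $\kappa_q=0$ value $\int_{-1}^{1}(1-t^2)^{\ddiamond}\,dt = \sqrt{\pi}\,\Gamma(\dstar+\tfrac12)/\Gamma(\tfrac{d}{2})$, after which $\log\Gamma\LP\tfrac{d}{2}\RP$ cancels $\log(\dstar)!$ and only $\dstar\log 2$ survives. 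That substitution is exact only in the limit $\kappa_q\to 0$ (for general $\kappa_q$ the integral retains $\kappa_q$-dependence, which is why $\log I_{\dstar}(\kappa_q)$ grows with $\kappa_q$), so the discrepancy between your route and the paper's is not a matter of presentation: your honest accounting exposes $\kappa_q$-dependent terms that the paper's proof silently discards. If you want to complete a proof of the corollary as stated, you must either justify that substitution or restrict the claim to the $\kappa_q\to 0$ regime; the chain of identities alone will not do it.
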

\begin{proof}
{\allowdisplaybreaks 
\begin{align}
    &\KL(q(\x) || p(\x)) = \int_{\x} q(\x) \log \frac{q(\x)}{p(\x)} d\x, \notag \\
                        &= \int_{\x} q(\x) \LB \log c_d(\kappa_q)e^{\kappa_q\mub_q'\x} - \log c_d(0) \RB d\x, \notag \\
                        &= \int_{\x} q(\x) \LB \kappa_q\mub_q'\x + \log c_d(\kappa_q) - \log \Gamma \LP \frac{d}{2} \RP + \log \LP 2\pi^{\frac{d}{2}} \RP \RB d\x, \notag \\
                        &= \int_{\x} q(\x) \LB \kappa_q\mub_q'\x + \log c_d(\kappa_q) - \log(\dstar)! + (d/2) \log \LP 2\pi \RP \RB d\x, \notag \\                        
                        &= \int_{\x} q(\x) \LB \kappa_q\mub_q'\x + \dstar\log(\kappa_q) - (d/2)\log(2\pi)\right. \notag\\
                        &\hspace{1.1cm}\left. - \log I_{\dstar}(\kappa_q) - \log(\dstar)! + (d/2) \log \LP 2\pi \RP \RB d\x, \notag \\
                        &= \int_{\x} q(\x) \LB \kappa_q\mub_q'\x + \dstar\log(\kappa_q) - \log I_{\dstar}(\kappa_q) - \log(\dstar)! \RB d\x, \notag \\
                        &= \int_{\x} q(\x) \LB \kappa_q\mub_q'\x + \dstar\log(\kappa_q) - \log\LP \frac{\kappa_q}{2}\RP^{\dstar} + \log \Gamma\LP\frac{d}{2}\RP - \log(\dstar)! \RB d\x, \notag \\
                        &= \int_{\x} q(\x) \LB \kappa_q\mub_q'\x + \dstar\log(\kappa_q) - \dstar \log\LP \frac{\kappa_q}{2}\RP \RB d\x, \notag \\
                        &= \int_{\x} q(\x) \LB \kappa_q\mub_q'\x - \dstar \log 2 \RB d\x, \notag \\
                        &= \kappa_q - \dstar \log 2, \notag \\
\end{align}
}
\end{proof}

For this special case, it can be seen that the dependence on the dimension is much more benign. This could prove useful for further computation (\eg if the \ac{KL}-divergence were to be used in a \ac{PAC}-Bayes bound \cite{Langford05}).



\section{Conclusions}
We have presented a derivation of the \acf{KL}-divergence for the \acf{VMF}-distribution, including the special case of a uniform prior over the hypersphere.

\bibliographystyle{plainnat}
\bibliography{kl_vmf}

\begin{thebibliography}{8}
\providecommand{\natexlab}[1]{#1}
\providecommand{\url}[1]{\texttt{#1}}
\expandafter\ifx\csname urlstyle\endcsname\relax
  \providecommand{\doi}[1]{doi: #1}\else
  \providecommand{\doi}{doi: \begingroup \urlstyle{rm}\Url}\fi

\bibitem[Abramowitz et~al.(1972)Abramowitz, Stegun, et~al.]{Abramowitz72}
Milton Abramowitz, Irene~A Stegun, et~al.
\newblock \emph{Handbook of mathematical functions}, volume~1.
\newblock Dover New York, 1972.

\bibitem[Banerjee et~al.(2005)Banerjee, Dhillon, Ghosh, and Sra]{Banerjee05}
Arindam Banerjee, Inderjit~S Dhillon, Joydeep Ghosh, and Suvrit Sra.
\newblock Clustering on the unit hypersphere using von mises-fisher
  distributions.
\newblock In \emph{Journal of Machine Learning Research}, pages 1345--1382,
  2005.

\bibitem[Fisher(1953)]{Fisher53}
Ronald Fisher.
\newblock Dispersion on a sphere.
\newblock In \emph{Proceedings of the Royal Society of London A: Mathematical,
  Physical and Engineering Sciences}, volume 217, pages 295--305. The Royal
  Society, 1953.

\bibitem[Langford(2005)]{Langford05}
John Langford.
\newblock Tutorial on practical prediction theory for classification.
\newblock In \emph{Journal of machine learning research}, pages 273--306, 2005.

\bibitem[Mardia and El-Atoum(1976)]{Mardia76}
Kanti~V Mardia and SAM El-Atoum.
\newblock Bayesian inference for the von mises-fisher distribution.
\newblock \emph{Biometrika}, 63\penalty0 (1):\penalty0 203--206, 1976.

\bibitem[Mardia(2014)]{Mardia14}
Kantilal~Varichand Mardia.
\newblock \emph{Statistics of directional data}.
\newblock Academic Press, 2014.

\bibitem[Mardia and Zemroch(1975)]{Mardia75}
KV~Mardia and PJ~Zemroch.
\newblock Algorithm as 86: The von mises distribution function.
\newblock \emph{Applied Statistics}, pages 268--272, 1975.

\bibitem[Watamori et~al.(1996)]{Watamori96}
Yoko Watamori et~al.
\newblock Statistical inference of langevin distribution for directional data.
\newblock \emph{Hiroshima Mathematical Journal}, 26\penalty0 (1):\penalty0
  25--74, 1996.

\end{thebibliography}

\end{document}